\newtheorem*{rep@theorem}{\rep@title}
\newcommand{\newreptheorem}[2]{%
\newenvironment{rep#1}[1]{%
 \def\rep@title{#2 \ref{##1}}%
 \begin{rep@theorem}}%
 {\end{rep@theorem}}}
\newcommand{\fml}[1]{{\mathcal{#1}}}
\newcommand{\tn}[1]{\textnormal{#1}}
\newcommand{\mbf}[1]{\ensuremath\mathbf{#1}}
\newcommand{\mbb}[1]{\ensuremath\mathbb{#1}}
\newcommand{\msf}[1]{\ensuremath\mathsf{#1}}
\newcommand{\tsf}[1]{{\textsf{\small #1}}}
\newcommand{\lvt}{\mathbf{t}}
\newcommand{\lvf}{\mathbf{f}}
\DeclareMathOperator*{\limply}{\rightarrow}
\DeclareMathOperator*{\argmax}{argmax}
\DeclareMathOperator*{\prob}{Pr}
\DeclareMathOperator*{\lprob}{lPr}
\definecolor{gray}{rgb}{.4,.4,.4}
\definecolor{midgrey}{rgb}{0.5,0.5,0.5}
\definecolor{middarkgrey}{rgb}{0.35,0.35,0.35}
\definecolor{darkgrey}{rgb}{0.3,0.3,0.3}
\definecolor{darkred}{rgb}{0.7,0.1,0.1}
\definecolor{midblue}{rgb}{0.2,0.2,0.7}
\definecolor{darkblue}{rgb}{0.1,0.1,0.5}
\definecolor{defseagreen}{cmyk}{0.69,0,0.50,0}
\newcommand{\nnote}[1]{\noindent$\llbracket$\textcolor{darkred}{nina}: \emph{\textcolor{blue}{#1}}$\rrbracket$}
\newcommand{\jnoteF}[1]{}
\newcommand{\TopBlankLine}{\vspace*{3.5pt}}
\newcommand{\BotBlankLine}{\vspace*{1.5pt}}
\newcommand{\fnsqz}{0}
\begin{document} 

\title{Explaining Naive Bayes and Other Linear \\
  Classifiers with Polynomial Time and Delay}

\titlerunning{Explaining Linear Classifiers}

\author{%
  Joao Marques-Silva\inst{1} \and
  Thomas Gerspacher\inst{1} \and
  Martin C. Cooper\inst{2,1} \and\\
  Alexey Ignatiev\inst{3} \and
  Nina Narodytska\inst{4}
}

\authorrunning{J.~Marques-Silva et al.}

\institute{
  ANITI, Universit\'{e} de Toulouse, France \\
  \email{joao.marques-silva@irit.fr}, 
  \email{thomas.gerspacher@irit.fr}
  \and
  IRIT, Universit\'{e} de Toulouse III, France,
  \email{cooper@irit.fr}
  \and
  Monash University, Australia, 
  \email{alexey.ignatiev@monash.edu}
  \and
  VMware Research, CA, USA,
  \email{nnarodytska@vmware.com}
}

\maketitle

\begin{abstract}
  Recent work proposed the computation of so-called PI-explanations of
  Naive Bayes Classifiers (NBCs)~\cite{darwiche-ijcai18}.
  PI-explanations are subset-minimal sets of feature-value pairs that
  are sufficient for the prediction, and have been computed with
  state-of-the-art exact algorithms that are worst-case exponential in
  time and space.
  In contrast, we show that the computation of one PI-explanation for
  an NBC can be achieved in log-linear time, and that the same result
  also applies to the more general class of linear classifiers.
  Furthermore, we show that the enumeration of PI-explanations can be
  obtained with polynomial delay.
  Experimental results demonstrate the performance gains of the new
  algorithms when compared with earlier work. The experimental results
  also investigate ways to measure the quality of heuristic
  explanations.
\end{abstract}
%
%

\section{Introduction} \label{sec:intro}

Approaches proposed in recent years for computing explanations of
Machine Learning (ML) models can be broadly characterized as
\emph{heuristic} or \emph{non-heuristic}\footnote{There is a large
  body of recent work on explaining ML models. Example recent
  overviews include~\cite{%
  pedreschi-acmcs19,xai-bk19,muller-xai19-ch01,miller-aij19,miller-acm-xrds19,anjomshoae-aamas19,russell-fat19a,zhu-nlpcc19,klein-corr19}.}.
Heuristic approaches denote those providing \emph{no} formal
guarantees on their results. In contrast, non-heuristic approaches
\emph{do} provide some sort of formal guarantee(s) on their results,
usually at the cost of increased computational complexity.
Among the heuristic approaches for finding explanations, two have been
studied in greater detail. One line of work focuses on devising
model-agnostic linear approximations of the underlying
model~\cite{guestrin-kdd16,lundberg-nips17}.
Another line of work is exemplified by Anchor~\cite{guestrin-aaai18},
and targets the computation of a set of feature-value pairs associated
with a given instance as a way of explaining the prediction.
To date, all non-heuristic methods have focused on computing sets of
feature-value pairs that are sufficient for the
prediction~\cite{darwiche-ijcai18,inms-aaai19,darwiche-aaai19,darwiche-arxiv-2020}%
\footnote{%
  Earlier work imposed the additional restriction of considering
  boolean-valued features. Clearly, non-boolean features can be
  binarized, e.g.\ with the one hot encoding, at the cost of adding
  additional features.}.
%
Moreover, in terms of formal guarantees, \cite{darwiche-ijcai18}
studies two distinct definitions of explanations.
%
%
A PI-explanation represents a subset-minimal set of feature values
that entails the outcome of the decision function for the predicted
class whatever the values of the other features 
(i.e.\ it represents a \emph{prime implicant} of the outcome of the
decision function). PI-explanations have also been studied
under the name of abductive explanations~\cite{inms-aaai19}.
In contrast, and assuming binary features, an MC-explanation is a
cardinality-minimal set of equal-valued features that entails the
outcome of the decision function. 
Non-heuristic approaches are model-based, and so earlier work
specifically considered Naive-Bayes
Classifiers (NBCs) and Latent-Tree Classifiers
(LTCs)~\cite{darwiche-ijcai18,darwiche-arxiv-2020}, Bayesian Network
Classifiers~\cite{darwiche-aaai19,darwiche-arxiv-2020}, and Neural
Networks~\cite{inms-aaai19}.

In the concrete case of computing (non-heuristic) PI-explanations for
NBCs, earlier work~\cite{darwiche-ijcai18} proposed algorithms that
are worst-case exponential in both time and space.
In contrast, in this paper we propose a novel non-heuristic solution
for computing PI-explanations of NBCs and other linear
classifiers~\footnote{%
  In fact, the paper considers a generalization of linear classifiers,
  that accommodates both real-valued and categorical features, which
  serves to streamline the presentation. This generalization will be
  referred to as an \emph{eXtended Linear Classifier} (XLC).
},
which exhibits two fundamental advantages over earlier work.
First, the paper shows that computing PI-explanations for NBCs (but
also for any linear classifier) is in P, by proposing a log-linear
algorithm for computing one smallest size PI-explanation.
Second, the paper proposes a polynomial (log-linear) delay algorithm
for enumerating the PI-explanations of NBCs (and also of any linear
classifier).
Furthermore, the paper presents an experimental evaluation of
different approaches for explaining NBCs with PI-explanations, 
including the heuristic solutions computed by
Anchor~\cite{guestrin-aaai18} and SHAP~\cite{lundberg-nips17}%
\footnote{It should be noted that for linear classifiers (including
  NBCs), heuristic explanation approaches based on linear
  approximations, such as those provided by LIME~\cite{guestrin-kdd16}
  or SHAP~\cite{lundberg-nips17}, can be regarded as uninteresting,
  since the model is itself linear. Nevertheless, aiming for coverage,
  we opt to include also results for SHAP.}.
Moreover, although (real-valued) linear classifiers can be viewed
as interpretable~\cite{guestrin-kdd16}, this does not equate with
computing PI-explanations, particularly when features are
categorical. 
To the best of our knowledge,
proving the (polynomial) complexity of computing PI-explanations for
linear classifiers (including NBCs) closes an open problem.

The paper is organized as follows.
\autoref{sec:prelim} introduces the concepts and notation used
throughout the paper.
\autoref{sec:xlc} introduces XLCs (a simple extension of linear
classifiers (LCs)), and develops a new approach for computing, in
polynomial time, one PI-explanation for XLCs.
\autoref{sec:xlc} also proposes a polynomial delay algorithm for the
enumeration of PI-explanations of XLCs.
\autoref{sec:res} compares dedicated approaches for explaining
NBCs~\cite{darwiche-ijcai18} with the algorithms proposed in this
paper, but also with the explanations produced by heuristic
approaches.
The paper concludes in~\autoref{sec:conc}.

\jnoteF{ToDo: confirm whether SHAP computes linear approximation.}

\section{Preliminaries} \label{sec:prelim}

\paragraph{Explanations of ML models.}
We consider a classification problem with two classes
$\fml{K}=\{\oplus,\ominus\}$, defined on a set of features (or
attributes) $e_1,\ldots,e_n$, which will be represented by their
indices $\fml{E}=\{1,\ldots,n\}$. The features can either be
real-valued or categorical. For real-valued features, we have
$\lambda_i\le{e_i}\le\mu_i$, where $\lambda_i$, $\mu_i$ are given
lower and upper bounds. For categorical features, we have
$e_i\in\{1,\ldots,d_i\}$.
A concrete assignment to the features referenced by $\fml{E}$ is
represented by an $n$-dimensional vector $\mbf{a}=(a_1,\ldots,a_n)$,
where $a_j$ denotes the value assigned to feature $j$, represented by
variable $e_j$, such that $a_j$ is taken from the domain of $e_j$. The
set of all $n$-dimensional vectors denotes the \emph{feature space}
$\mbb{E}$.
Given a classifier with features $\fml{E}$, a \emph{decision
  function}~\cite{darwiche-ijcai18} is a mapping from the feature
space to the set of classes, i.e.\ 
$\tau:\mbb{E}\to\fml{K}$. 
For example, for a linear classifier, the decision function picks 
$\oplus$ if $\sum_iw_ie_i>0$, and $\ominus$ if $\sum_iw_ie_i\le0$.
Given $\mbf{a}\in\mbb{E}$, with $\tau(\mbf{a})=\oplus$, we consider
the set of feature literals of the form $(e_i=a_i)$, where $e_i$
denotes a variable and $a_i$ a constant.
A PI-explanation~\cite{darwiche-ijcai18} is a subset-minimal set
$\fml{P}\subseteq\fml{E}$, denoting feature literals, such that,
%
\begin{equation} \label{eq:xpi01}
\forall(\mbf{e}\in\mbb{E}).\bigwedge\nolimits_{j\in\fml{P}}(e_j=a_j) \ \limply \ \tau(\mbf{e})=\oplus
\end{equation}
is true. 
Alternatively, we can represent~\eqref{eq:xpi01} as a rule:
\begin{equation} \label{eq:xpi02}
  \begin{array}{cccc}
    \textbf{IF} & \bigwedge\nolimits_{j\in\fml{P}}(e_j=a_j) & \textbf{THEN} &
    \tau(\mbf{e})=\oplus\\
  \end{array}
\end{equation}
(The same definitions apply in the case of class $\ominus$ 
(given $\mbf{a}\in\mbb{E}$, with $\tau(\mbf{a})=\ominus$).)
%

\paragraph{Naive Bayes Classifier (NBC).}
NBCs~\cite{duda-bk73} can be viewed as special cases of Bayesian Network
Classifiers (BNCs)~\cite{friedman-ml97}, that make strong
conditional independence assumptions among the features.
%
%
Graphically, NBCs are represented as depicted in~\autoref{fig:ex01}
for a concrete example.
Given some evidence $\mbf{e}$   
(in our case, this is an
assignment to the features), the predicted class is given
by:
\begin{equation} \label{eq:nbc1}
  \tau(\mbf{e}) = \argmax\nolimits_{c\in\fml{K}}\left(\prob(c|\mbf{e})\right)
\end{equation}
%
%
It is well known that $\prob(c|\mbf{e})$ can be computed as follows:
$\prob(c|\mbf{e})=\frac{\prob(c,\mbf{e})}{\prob(\mbf{e})}$.
However, $\prob(\mbf{e})$ is constant for every $c\in\fml{K}$. Hence, 
\eqref{eq:nbc1} can be rewritten as follows:
\begin{equation} \label{eq:nbc2}
  \tau(\mbf{e}) = \argmax\nolimits_{c\in\fml{K}}\left(\prob(c,\mbf{e})\right)
\end{equation}
Finally, assuming features to be mutually conditional
independent,~\eqref{eq:nbc2} can be rewritten as follows:
\begin{equation} \label{eq:nbc3}
  \tau(\mbf{e}) =
  \argmax\nolimits_{c\in\fml{K}}\left(\prob(c)\times\prod\nolimits_i\prob(e_i|c)\right)
\end{equation}
A standard transformation is to apply logarithms, thus getting:
\begin{equation} \label{eq:nbc4}
  \tau(\mbf{e}) =
  \argmax\nolimits_{c\in\fml{K}}\left(\log{\prob(c)}+\sum\nolimits_i\log{\prob(e_i|c)}\right)
\end{equation}
Also, if $\prob(e_i|c)=0$, then we use instead a sufficiently large 
negative value $\mbb{M}$~\cite{park-aaai02}%
~\footnote{This section follows~\cite{park-aaai02} throughout. An
  alternative would be to use Laplace smoothing~\cite{DBLP:books/daglib/0021593}.}, i.e.\ we pick
$\max(\mbb{M},\log(\prob(e_i|c)))\in[\mbb{M},0]$.
(A simple solution is to use the sum of the logarithms of all the
non-zero probabilities plus some $\epsilon<0$.) 
For simplicity, i.e.\ to work with positive values, we can add a
sufficiently large positive threshold $\mbb{T}$ to each probability,
to serve as a reference, thus obtaining:
\begin{equation} \label{eq:nbc5}
  \tau(\mbf{e}) =
  \argmax\nolimits_{c\in\fml{K}}\left((\mbb{T}+\log{\prob(c)})+\sum\nolimits_i(\mbb{T}+\log{\prob(e_i|c)})\right)
\end{equation}
(For example, we can set $\mbb{T}$ to the complement of the negative
value with the largest absolute value.)
Also for simplicity, we use the notation
$\lprob(\alpha)\triangleq\mbb{T}+\max(\mbb{M},\log(\prob(\alpha)))$.

\paragraph{Running Example.}
Consider the NBC shown
in~\autoref{fig:ex01}~\footnote{%
      This example of an NBC is adapted from~\cite[Ch.10]{barber-bk12},
      with some of the conditional probabilities changed.}.
%
%
\begin{figure}[t]
  \begin{center}
    \scalebox{0.9125}{
\begin{tikzpicture} 
  \node[latent]                     (G)      {$G$};   %
  \node[latent,below left=1.5cm and 0.75cm of G]    (R2)     {$R_2$}; %
  \node[latent,below right=1.5cm and 0.75cm of G]   (R3)     {$R_3$}; %
  \node[latent,left=1.5cm of R2]    (R1)     {$R_1$}; %
  \node[latent,right=1.5cm of R3]   (R4)     {$R_4$}; %

  \edge[->] {G} {R1,R2,R3,R4} ;


  \node[right=0.5cm of G,yshift=5pt] (CPT0) { 
    \begin{tabular}{|c|c|}\hline
      $G$ & $\prob(G)$ \\ \hline
      $\ominus$ & 0.90 \\ \hline
  \end{tabular} } ;
  
  \node[left=0.5cm of R1,yshift=-10pt] (CPT1) {
    \begin{tabular}{|c|c|}\hline
      $G$ & $\prob(R_1|G)$ \\ \hline
      $\oplus$ & 0.95 \\\hline
      $\ominus$ & 0.03 \\\hline\end{tabular} } ;
  
  \node[below=0.35cm of R2,xshift=-10pt] (CPT2) {
    \begin{tabular}{|c|c|}\hline
      $G$ & $\prob(R_2|G)$ \\ \hline
      $\oplus$ & 0.05 \\\hline
      $\ominus$ & 0.95 \\\hline\end{tabular} } ; 
  
  \node[below=0.35cm of R3,xshift=10pt] (CPT3) {
    \begin{tabular}{|c|c|}\hline
      $G$ & $\prob(R_3|G)$ \\ \hline
      $\oplus$ & 0.02 \\\hline
      $\ominus$ & 0.34 \\\hline\end{tabular} } ;
  
  \node[right=0.5cm of R4,yshift=-10pt] (CPT4) {
    \begin{tabular}{|c|c|}\hline
      $G$ & $\prob(R_4|G)$ \\ \hline
      $\oplus$ & 0.20 \\\hline
      $\ominus$ & 0.75 \\\hline\end{tabular} } ;

\end{tikzpicture}
}
  \end{center}
  \caption{Running example.} \label{fig:ex01} 
\end{figure}
The features are the random variables $R_1$, $R_2$, $R_3$ and $R_4$.
Each $R_i$ can take values $\lvt$ or $\lvf$ denoting, respectively,
whether a listener likes or not that radio station.
%
Random variable $G$ denotes an \tsf{age} class, which can take
values \tsf{Y} and \tsf{O}, denoting \tsf{young} and \tsf{older}
listeners, respectively. Using the notation proposed earlier, we will
use $\oplus$ for \tsf{Y} and $\ominus$ for \tsf{O}. We also associate
$\oplus$ with $1$ or $\lvt$ and $\ominus$ with $0$ or $\lvf$.
In general we have,
\begin{align} \label{eq:ex01a}
  \prob(G,R_1,R_2,&R_3,R_4)= \nonumber \\
  & \prob(G)\times\prob(R_1|G)\times\prob(R_2|G)\times\prob(R_3|G)\times\prob(R_4|G)
\end{align}

Considering the assignment
$(G,R_1,R_2,R_2,R_3)=(\oplus,\lvt,\lvf,\lvt,\lvf)$,
and using $g$ to denote $G=\oplus$, $r_i$ to denote $R_i=\lvt$ and $\neg{r_i}$ to denote
$R_i=\lvf$,~\eqref{eq:ex01a}
can be written as follows:
%
\[
\prob(g,r_1,\neg{r_2},r_3,\neg{r_4})=\prob(g)\times\prob(r_1|g)\times\prob(\neg{r_2}|g)\times\prob(r_3|g)\times\prob(\neg{r_4}|g)
\]
Let us consider 
$\mbf{a}=(R_1,R_2,R_3,R_4)=(\lvt,\lvf,\lvt,\lvf)$.
%
%
%
%
%
Since all probabilities are strictly positive,
we set $\mbb{M}$ to a very large negative (irrelevant) value. In
addition, we set $\mbb{T}$ to a value above the complement of the
logarithm of the smallest probability (i.e.\ 0.02), e.g\ we can set
$\mbb{T}=4 > -\log(0.02)$. 
\begin{figure}[t]
  \begin{subfigure}[t]{1.0\linewidth}
    \centering\scalebox{0.925}{
\renewcommand{\tabcolsep}{0.35em}
\renewcommand{\arraystretch}{1.175}
\begin{tabular}{|c|ccccc|c|} \hline  
  & $\prob(g)$ & $\prob(r_1|g)$ & $\prob(\neg{r_2}|g)$ & $\prob(r_3|g)$
  & $\prob(\neg{r_4}|g)$ & $\lprob(\oplus|\mbf{a})$
  \\ \hline  
  $\prob(\cdot)$ & 0.10 & 0.95 & 0.95 & 0.02 & 0.80 &
  \\
  $\lprob(\cdot)$ & 1.70 & 3.95 & 3.95 & 0.09 & 3.78 & 13.47
  \\ \hline  
\end{tabular}
}

    \caption{Computing $\lprob(\oplus|\mbf{a})$}
  \end{subfigure}

  \begin{subfigure}[t]{1.0\linewidth}
    \centering\scalebox{0.925}{
\renewcommand{\tabcolsep}{0.35em}
\renewcommand{\arraystretch}{1.175}
\begin{tabular}{|c|ccccc|c|} \hline  
  & $\prob(\neg{g})$ & $\prob(r_1|\neg{g})$ &
  $\prob(\neg{r_2}|\neg{g})$ & $\prob(r_3|\neg{g})$ &
  $\prob(\neg{r_4}|\neg{g})$ & $\lprob(\ominus|\mbf{a})$
  \\ \hline  
  $\prob(\cdot)$ & 0.90 & 0.03 & 0.05 & 0.34 & 0.25 &
  \\
  $\lprob(\cdot)$ & 3.89 & 0.49 & 1.00 & 2.92 & 2.61 & 10.91
  \\ \hline  
\end{tabular}
}

    \caption{Computing $\lprob(\ominus|\mbf{a})$}
  \end{subfigure}
  \caption{Deciding prediction for
    $\mbf{a}=(\lvt,\lvf,\lvt,\lvf)$} \label{fig:ex02}
\end{figure}
%
%
Using~\eqref{eq:nbc5}, we get the values shown in~\autoref{fig:ex02}.
As can be concluded, the prediction will be $\oplus$.
Observe that neither the value of $\mbb{M}$ nor of $\mbb{T}$ affect
the prediction.
%
%

%


%


\section{Explaining Extended Linear Classifiers} \label{sec:xlc}

This section first introduces Extended Linear Classifiers (XLCs) and
then details how PI-explanations can be computed for predictions of
XLCs.


\subsection{Extended Linear Classifiers}
Let $\fml{E}$ be partitioned into $\fml{R}$ and $\fml{C}$, denoting
respectively the real-valued and the categorical features. Each
real-valued feature with index $i\in\fml{R}$ takes bounded values
$\lambda_i\le{e_i}\le\mu_i$. For each categorical feature $j\in\fml{C}$,
$e_j\in\{1,\ldots,d_j\}$.

We consider an XLC, that encompasses real-valued and categorical
features. Let,
\begin{equation} \label{eq:xlc01}
  \nu(\mbf{e})\triangleq%
  w_0 + \sum\nolimits_{i\in\fml{R}}w_ie_i +
  \sum\nolimits_{j\in\fml{C}}\sigma(e_j,v_j^1,v_j^2,\ldots,v_j^{d_j})
\end{equation}
%
$\sigma$ is a selector function that picks the value $v_j^{r}$ iff
$e_j$ takes value $r$.
Moreover, let us define the decision function, $\tau(\mbf{e})=\oplus$
if $\nu(\mbf{e})>0$ and $\tau(\mbf{e})=\ominus$ if $\nu(\mbf{e})\le0$. 
%

\paragraph{Reducing linear classifiers to XLCs.}
For a linear classifier, with only 
real-valued features, simply set $\fml{C}=\emptyset$.
For an NBC with boolean features\footnote{%
  Given the proposed reductions, it is immediate to represent an NBC
  with categorical features as an XLC.},
we consider a different reduction with $\fml{R}=\emptyset$, starting from~\eqref{eq:nbc5}.
Moreover, the $\argmax$ operator in~\eqref{eq:nbc5} can be replaced by
an inequality, from which we get, 
%
{\small
\begin{align}\label{eq:nbc2xlc}
  & \lprob(\oplus)-\lprob(\ominus)+\nonumber \\
  & \sum\nolimits_{i=1}^{n}(\lprob(e_i|\oplus)-\lprob(e_i|\ominus))e_i+
  \sum\nolimits_{i=1}^{n}(\lprob(\neg{e_i}|\oplus)-\lprob(\neg{e_i}|\ominus))\neg{e_i}>0
\end{align}
}%
The reduction is completed by setting:
$w_0\triangleq\lprob(\oplus)-\lprob(\ominus)$,
$v_j^1\triangleq\lprob(\neg{e_j}|\oplus)-\lprob(\neg{e_j}|\ominus)$,
$v_j^2\triangleq\lprob({e_j}|\oplus)-\lprob({e_j}|\ominus)$, and
$d_j\triangleq{2}$.

\begin{figure}[t]
  \begin{subfigure}[t]{1.0\linewidth}
    \begin{center}
      \renewcommand{\tabcolsep}{0.25em}
\renewcommand{\arraystretch}{1.125}
\begin{tabular}{|c|cc|cc|cc|cc|} \hline  
  $w_0$ &
  $v_1^1$ & $v_1^2$ &
  $v_2^1$ & $v_2^2$ &
  $v_3^1$ & $v_3^2$ &
  $v_4^1$ & $v_4^2$
  \\ \hline  
  -2.19 &
  -2.97 & 3.46 &  
  2.95 & -2.95 &
  0.4 & -2.83 &
  1.17 & -1.32
  \\ \hline  
\end{tabular}

      \caption{Example reduction of NBC to XLC
        (\autoref{ex:ex01})} \label{fig:nbc2xlc}
    \end{center}
  \end{subfigure}
  
  \begin{subfigure}[t]{1.0\linewidth}
    \begin{center}
      \renewcommand{\tabcolsep}{0.25em}
\renewcommand{\arraystretch}{1.125}
\begin{tabular}{|c|cccc|c|} \hline  
  $\Gamma$ &    
  $\delta_1$ &  
  $\delta_2$ &  
  $\delta_3$ &  
  $\delta_4$ &  
  $\Phi$        
  \\ \hline  
  2.56 &
  6.43 & 
  5.90 &
  0.00 &
  2.49 &
  12.26
  \\ \hline  
\end{tabular}

      \caption{Computing $\delta_j$'s for the XLC 
        (\autoref{ex:ex02})} \label{fig:ex03}
    \end{center}
  \end{subfigure}
  \caption{Values used in the running example (\autoref{ex:ex01}
    and~\autoref{ex:ex02})} \label{fig:exs}
\end{figure}

\begin{example} \label{ex:ex01}
  \autoref{fig:nbc2xlc} shows the resulting XLC formulation for the
  example in~\autoref{fig:ex02}. We also let $\lvf$ be associated
  with value 1 and $\lvt$ be associated with value 2, and $d_j=2$.
\end{example}

\subsection{Explaining XLCs}

We now investigate how (smallest or cardinality-minimal)
PI-explanations can be computed for XLCs, and also how (minimal)
PI-explanations can be enumerated. For this, we need to assess how
\emph{free} some of the features are.
%
%
For a given instance $\mbf{e}=\mbf{a}$, define a \emph{constant} slack
(or gap) value $\Gamma$ given by,
\begin{equation} \label{eq:xlc02}
  \Gamma\triangleq\nu(\mbf{a})=%
                     {w_0} + \sum\nolimits_{i\in\fml{R}}w_ia_i +
                     \sum\nolimits_{j\in\fml{C}}\sigma(a_j,v_j^1,v_j^2,\ldots,v_i^{d_j})
\end{equation}
i.e.\ this is the value obtained when deciding $\oplus$ to be the
picked class, given the assignment $\mbf{e}=\mbf{a}$.

We are interested in computing one
PI-explanation~\cite{darwiche-ijcai18} of an XLC, but we are also
interested in enumerating PI-explanations. 
As argued in~\autoref{sec:prelim}, this corresponds to finding a 
subset-minimal set of literals $\fml{P}\subseteq\fml{E}$
such that~\eqref{eq:xpi01} holds, or alternatively,
\begin{equation} \label{eq:xlc03}
  \forall(\mbf{e}\in\mbb{E}).\bigwedge\nolimits_{j\in\fml{P}}(e_j=a_j) \ \limply \ \left(\nu(\mbf{e})>0\right)
\end{equation}
under the assumption that $\nu(\mbf{a})>0$.
%
%
%
%
%
%
%
%
In what follows, we partition $\fml{E}$ into $\fml{P}$ and $\fml{N}$, 
respectively the picked and the non-picked attributes from $\fml{E}$.
%
%

\paragraph{Categorical case.}
Let us first consider $\fml{R}=\emptyset$.
Each feature $e_j$ is assigned value $a_j$, which results in selecting
some value $v_j^{a_j}$, i.e.\ the value from the weights associated
with $e_j$ which is picked when $e_j=a_j$. 
Thus, $\Gamma$ is computed as follows: 
$\Gamma=w_0+\sum_{j\in\fml{C}}v_j^{a_j}$.

Moreover, let $v_j^{\omega}$ denote the \emph{smallest} (or
\emph{worst-case}) value associated with $e_j$. 
Then, by letting every $e_j$ take \emph{any} value,
the \emph{worst-case} value of $\nu(\mbf{e})$ is,
\begin{equation}
  \Gamma^{\omega}=w_0+\sum\nolimits_{j\in\fml{C}}v_j^{\omega}
\end{equation}
We are interested in cases where $\Gamma^{\omega}\le0$, corresponding to
predicting $\ominus$ instead of $\oplus$. (Otherwise the prediction
would not change from $\oplus$.) The expression above can be rewritten
as follows,
\begin{equation}
  \begin{array}{rcl}
    \Gamma^{\omega} & = &
    w_0+\sum_{j\in\fml{C}}v_j^{a_j}-\sum_{j\in\fml{C}}(v_j^{a_j}-v_j^{\omega})\\[3.0pt]
    & = & \Gamma - \sum_{j\in\fml{C}}\delta_j = -\Phi \\ 
  \end{array}
\end{equation}
where we use $\delta_j\triangleq{v_j^{a_j}}-{v_j^{\omega}}$,
and
$\Phi\triangleq\sum_{j\in\fml{C}}\delta_j-\Gamma=-\Gamma^{\omega}$.
%
Our goal is to find a smallest (or subset-minimal) set $\fml{P}$ such
that the prediction is still $\oplus$ 
(whatever the values of the other features):
%
\begin{equation} \label{eq:xlc04}
  w_0 + \sum\nolimits_{j \in \fml{P}} v_j^{a_j} + \sum\nolimits_{j \notin \fml{P}} v_j^{\omega} =
  -\Phi + \sum\nolimits_{j\in\fml{P}}\delta_j > 0
\end{equation}
i.e.\ we want to pick a smallest (or subset-minimal) set of literals
that ensures that the prediction will be $\oplus$.
In turn,~\eqref{eq:xlc04} can be represented as the following
optimization problem:
\begin{equation} \label{eq:xlc05}
  \begin{array}{lcl}
    \tn{min}  & \quad & \sum_{i=1}^{n}p_i \\[4.5pt]
    \tn{s.t.} & \quad &
    \sum_{i=1}^{n}\delta_ip_i>\Phi \\[2.5pt]
    & & p_i\in\{0,1\}\\
  \end{array}
\end{equation}
where the variables $p_i$ assigned value 1 denote the indices included
in $\fml{P}$.
Although solving~\eqref{eq:xlc05} seems to equate to solving an
NP-hard optimization, concretely the minimization version of the
knapsack problem~\cite{pisinger-bk04}, the fact that the coefficients
in the cost function are all equal to 1 makes the problem solvable in
log-linear time\footnote{%
  Pseudo-polynomial time algorithms for the knapsack problem are
  well-known~\cite{dantzig-or57,papadimitriou-bk82}. One concrete
  example~\cite{papadimitriou-bk82} yields a polynomial (cubic) time
  algorithm in the setting of computing a smallest PI-explanation of
  an XLC. We show that it is possible to devise a more efficient
  solution.}.
%
%
%
%
%
Concretely, we can now develop a greedy algorithm that computes a
smallest PI-explanation, representing one optimal solution
of~\eqref{eq:xlc05}. At each step, we simply pick the largest
$\delta_i$ that has not yet been picked. 
\begin{repproposition}{prop:1xpl}
  Let $\fml{S}=\langle{l_1},\ldots,{l_n}\rangle$ represent indices
  of $\fml{E}$ sorted by non-increasing value of $\delta_j$.
  Pick $k$ such that
  $\sum_{j\in\{{l_1},\ldots,{l_k}\}}\delta_j>\Phi$ and
  $\sum_{j\in\{{l_1},\ldots,{l_{k-1}}\}}\delta_j\le\Phi$.
  Then~\eqref{eq:xlc03} holds for
  $\fml{P}=\{p_{l_r}|1\le{r}\le{k}\}$, and $\fml{P}$ represents an
  optimal solution of $\eqref{eq:xlc05}$.
\end{repproposition}
Optimality of the computed solution is given by~\autoref{prop:1xpl}
(proof included in~\autoref{app:proofs}).


\begin{example} \label{ex:ex02}
  \autoref{fig:ex03} shows the values used for computing explanations
  for the example in~\autoref{fig:ex02}.\\
  For this example, the sorted $\delta_j$'s become
  $\langle\delta_1,\delta_2,\delta_4,\delta_3\rangle$.
  By picking $\delta_1$ and $\delta_2$, we ensure that the prediction
  is $\oplus$, independently of the evidence provided for features
  $e_3$ and $e_4$.
  Thus $(e_1)\land(\neg e_2)$ is a PI-explanation for the NBC shown
  in~\autoref{fig:ex01}, with evidence
  $(e_1,e_2,e_3,e_4)=(\lvt,\lvf,\lvt,\lvf)$.
  (It is easy to observe that
  $\tau(\lvt,\lvf,\lvf,\lvf)=\tau(\lvt,\lvf,\lvf,\lvt)=\tau(\lvt,\lvf,\lvt,\lvf)=\tau(\lvt,\lvf,\lvt,\lvt)=\oplus$).
\end{example}

\begin{algorithm}[t]
  {\relsize{\fnsqz}
%
%
%
%
%
\SetAlgoNoEnd
\SetAlgoNoLine
\SetFillComment
\SetKwBlock{Let}{let}{end}
\SetKwBlock{FBlock}{}{}
\SetKw{KwNot}{not\xspace}
\SetKw{KwAnd}{and\xspace}
\SetKw{KwOr}{or\xspace}
\SetKw{Break}{break\xspace}
\SetKwData{false}{{\small false}}
\SetKwData{true}{{\small true}}
\SetKwData{st}{\small{\sl st}}
\SetKwFunction{OneXpl}{{\sc OneExplanation}} %
\SetKwFunction{AllXpl}{{\sc AllExplanations}} %
\SetKwFunction{ValSt}{\sc EnterValidState}
\SetKwFunction{ReportXPl}{\sc ReportExplanation}
\SetKwBlock{Let}{let}{end}
\SetKwBlock{FBlock}{}{end}
\SetKwInOut{Global}{Global}
\SetKwHangingKw{Algorithm}{Algorithm}
\SetKwHangingKw{Function}{function}
\SetKw{Func}{Function}
\SetKwBlock{Begin}{}{}

\Func \OneXpl{$\msf{Vs}$,$\msf{Flip}$,$\Delta$,$\Phi^{R}$,$\msf{Idx}$,$\msf{Xpl}$}
\;
\Indp
\KwIn{
  $\msf{Vs}$:~Values of instance being explained;
  $\msf{Flip}$:~Array reference of decision steps;
  $\Delta$:~Sorted $\delta_j$'s;
  $\Phi^{R}$:~Explanation threshold;
  $\msf{Idx}$:~Index for $\Delta$;
  $\msf{Xpl}$:~Set reference of explanation literals
}
\KwOut{
  $\Phi^{R}$: Updated threshold; 
  $\msf{Idx}$: Updated index for $\Delta$
}
\TopBlankLine
{
  \lnlset{xpl:1}{1}
  \While{$\Phi^{R}\ge0$}{
    \lnlset{xpl:2}{2}
    $\msf{Idx}\gets\msf{Idx}+1$ \;
    \lnlset{xpl:3}{3}
    $\msf{Flip}[\msf{Idx}] \gets 0$ \;
    \lnlset{xpl:4}{4}
    $\Phi^{R}\gets\Phi^{R}-\Delta[\msf{Idx}]$ \;
    \lnlset{xpl:5}{5}
    $\msf{Xpl}\gets\msf{Xpl}\cup\{(e_{\msf{Idx}},\msf{Vs}[\msf{Idx}])\}$ \;
  }
  \lnlset{xpl:6}{6}
  \ReportXPl($\msf{Xpl}$) \;
  \lnlset{xpl:7}{7}
  \Return ($\Phi^{R},\msf{Idx}$) \;
}
\Indm
\BotBlankLine
%

  }
  \caption{Finding one explanation} \label{alg:1xpl}
\end{algorithm}

In the concrete case of NBCs, if the goal is to compute a single
explanation, then the algorithm detailed in this section is
exponentially more efficient (in the worst case) than earlier
work~\cite{darwiche-ijcai18}.
However, in some settings one wants to be able to analyze some or even
all explanations for a given instance (this is further discussed
in~\autoref{sec:res}). We describe next a polynomial (log-linear)
delay algorithm for enumeration of explanations for XLCs (and so for
NBCs).

\paragraph{Enumerating explanations with polynomial delay.}
As shown above, a smallest PI-explanation can be computed in log-linear
time by sorting the $\delta_i$ values and picking the first $k$
literals that ensure the prediction. We start by presenting a more
elaborate description of the algorithm, which we then use for devising
the enumeration of explanations with polynomial delay\footnote{%
  For a knapsack constraint, it is known that feasible solutions 
  can be enumerated with quadratic
  delay~\cite{lawler-sjc80,johnson-ipl88}. Nevertheless, we exploit 
  the problem's special structure to achieve a log-linear enumeration
  delay.}.
\autoref{alg:1xpl} shows the pseudo-code for computing one smallest
explanation. $\Delta$ denotes the array of sorted $\delta_j$'s. (The
pseudo-code assumes that the order $1,2,\ldots,n$ represents the
literals in sorted order.) $\Phi^{R}$ is initialized with the value of
$\Phi$, being updated as the algorithm(s) progress(es).
\autoref{alg:1xpl} corresponds to the direct application
of~\autoref{prop:1xpl}.
This algorithm can now be
exploited for implementing a polynomial delay algorithm for
enumerating PI-explanations.
%
%
%
\autoref{alg:allxpl} depicts the enumeration of PI-explanations.
The algorithm implements a (restricted) backtrack search procedure,
which in some circumstances can be shown to yield polynomial delay 
algorithms~\cite{cohen-jc04}.
$\msf{Idx}$ denotes the depth of the search tree and $\msf{Flip}$ (if
assigned 0) records which $\delta_j$'s are used for updating
$\Phi^{R}$. (The entries of $\msf{Flip}$ take value -1 if unused, and
value 1 if have been backtracked upon.)
A key aspect of the algorithm is that it only branches when it is
guaranteed that a PI-explanation can still be found, given
the prefix (of picked or not picked $\delta_j$'s) defined by
$\msf{Flip}$ and $\msf{Idx}$. Otherwise, the algorithm must backtrack
and enter a consistent state (with at most a linear backtracking
effort). 
\begin{algorithm}[t]
  {\relsize{\fnsqz}
%

%
\Func \AllXpl{$\msf{Vs}$,$\Delta$,$\Phi^{R}$}
\;
\Indp
\KwIn{
  $\msf{Vs}$:~Values of instance being explained;
  $\Delta$:~Sorted $\delta_j$'s;
  $\Phi^{R}$:~Explanation threshold 
}
\TopBlankLine
{
  \lnlset{axpl:1}{1}
  $(\msf{Xpl},\msf{Flip},\msf{Idx})\gets(\emptyset,[-1,\ldots,-1],0)$ \;
  \lnlset{axpl:2}{2}
  \While{$\msf{Idx}\ge0$}{
    \lnlset{axpl:3}{3}
    $(\Phi^{R},\msf{Idx})\gets\OneXpl(\msf{Vs},\msf{Flip},\Delta,\Phi^{R},\msf{Idx},\msf{Xpl})$ \;
    \lnlset{axpl:4}{4}
    $(\Phi^{R},\msf{Idx})\gets\ValSt(\msf{Vs},\msf{Flip},\Delta,\Phi^{R},\msf{Idx},\msf{Xpl})$ \;
  }
}
\Indm
\BotBlankLine
%

  }
  \caption{Finding all explanations} \label{alg:allxpl}
\end{algorithm}
%
\autoref{alg:valst} shows the backtrack step of the PI-enumeration
algorithm.
\autoref{alg:valst} terminates if no more PI-explanations can be
found, or with the guarantee that another PI-explanation can be
extracted with \autoref{alg:1xpl}.
\begin{algorithm}[t]
  {\relsize{\fnsqz}
%

%
\Func \ValSt{$\msf{Vs}$,$\msf{Flip}$,$\Delta$,$\Phi^{R}$,$\msf{Idx}$,$\msf{Xpl}$}
\;
\Indp
\KwIn{
  $\msf{Vs}$:~Values of instance being explained;
  $\msf{Flip}$:~Array reference of decision steps;
  $\Delta$:~Sorted $\delta_j$'s;
  $\Phi^{R}$:~Explanation threshold;
  $\msf{Idx}$:~Index for $\Delta$;
  $\msf{Xpl}$:~Set reference of explanation literals
}
\KwOut{
  $\Phi^{R}$: Updated threshold; 
  $\msf{Idx}$: Updated index for $\Delta$
}
\TopBlankLine
{
  \lnlset{vst:1}{1}
  \While{$\Phi^{R}<0$ \KwOr 
    $\sum_{i=\msf{Idx}}^{n}\Delta[i]<\Phi^{R}$}{
    \lnlset{vst:2}{2}
    \While{$\msf{Idx}\ge0\land\msf{Flip}[\msf{Idx}]=1$}{
      \lnlset{vst:3}{3}
      $\msf{Flip}[\msf{Idx}]\gets-1$ \;
      \lnlset{vst:4}{4}
      $\msf{Idx}\gets\msf{Idx}-1$ \;
    }
    \lnlset{vst:5}{5}
    \lIf{$\msf{Idx}<0$}{ \Return $(\Phi^{R},\msf{Idx})$ }
    \lnlset{vst:6}{6}
    $\msf{Xpl}\gets\msf{Xpl}\setminus\{(e_{\msf{Idx}},\msf{Vs}[\msf{Idx}])\}$\;
    \lnlset{vst:7}{7}
    $\Phi^{R}\gets\Phi^{R}+\Delta[\msf{Idx}]$\;
    \lnlset{vst:8}{8}
    $\msf{Flip}[\msf{Idx}] \gets 1$\;
  }
  \lnlset{vst:9}{9}
  \Return ($\Phi^{R},\msf{Idx}$) \;
}
\Indm
\BotBlankLine
%

  }
  \caption{Entering a valid state} \label{alg:valst}
\end{algorithm}
It is straightforward to conclude that both \autoref{alg:1xpl} and
\autoref{alg:valst} run in linear time on the size of the current
depth of the search tree (which is linear on the number of features).
Thus, we can list PI-explanations of XLC's with polynomial delay
(proof included in~\autoref{app:proofs}).

\begin{repproposition}{prop:allxpl}
  PI-explanations of an XLC can be enumerated with log-linear delay.
\end{repproposition}

\jnoteF{Add more detail. For instance, explain how one can decide
  whether a solution can/cannot be found given the status of the
  search tree.}

\paragraph{Real-valued \& mixed case.}
Let us now consider $\fml{R}\not=\emptyset$. As before, the prediction
is assumed to be $\oplus$.
For each feature, if $w_i>0$, then we are interested in assessing the
impact of reducing the value of $e_i$. Hence, the worst-case scenario
is achieved when $e_i=\lambda_i$. In this case, we define
$\delta_i=(a_i-\lambda_i)w_i$. A no-change constraint on the value of
$e_i$ is formulated as $e_i\ge{a_i}$ (i.e.\ we \emph{clamp} the value
of $e_i$ by imposing a lower bound on its value).
In contrast, if $w_i<0$, then we are interested in assessing the impact
of increasing the value of $e_i$. The worst-case scenario is now
$e_i=\mu_i$. In this case, we define $\delta_i=(a_i-\mu_i)w_i$.
Moreover, a no-change constraint on the value of $e_i$ is
formulated as $e_i\le{a_i}$ (i.e.\ in this case we \emph{clamp} the
value of $e_i$ by imposing an upper bound on its value).
Given the definition of the $\delta_i$ constants for real-valued
features, and associated literals in case of a no-change constraint,
we can compute explanations using the restricted knapsack problem
formulation as above.
Thus, we can also compute one cardinality optimal solution in
log-linear time, and enumerate subset-minimal solutions with
polynomial delay.

\section{Experimental Evaluation} \label{sec:res}
This section evaluates the PI-explanation enumerator XPXLC, that
implements the algorithms described in this paper%
\footnote{The source code of XPXLC, as well as the datasets,
  documentation, and additional examples can be obtained from the
  authors.}.
XPXLC was tested in Debian Linux on an Intel~Xeon~CPU~5160~3.00\,GHz
with 64\,GByte of memory.
When testing scalability, XPXLC was run with 8GByte limit on RAM and
two hours time limit.
The experiment was divided into 3 parts:
(1)~evaluating the raw performance of XPXLC,
(2)~comparing it with the state-of-the-art compilation
approach STEP~\cite{darwiche-ijcai18,darwiche-aaai19}, and
(3)~using complete enumeration of PI-explanations to assess the
quality of explanations of the well-known heuristic explainers
Anchor~\cite{guestrin-aaai18} and SHAP~\cite{lundberg-nips17}.

%
\noindent\textbf{Datasets.}
We selected a set of widely-used, publicly available, datasets
from~\cite{uci,pennml,kaggle}.
%
The total number of datasets used is 37.
For each dataset, we trained a Naive Bayes classifier\footnote{The
  CategoricalNB classifier of scikit-learn~\cite{sklearn} was used for
  this purpose.} using 80\% of the training data. 
The average test accuracy assessed for the 20\% remaining instances is
77.7\%.
(All the datasets and the trained classifiers are available in the
online repository.)
The experiments targeted XPXLC's ability to enumerate a given number
of explanations within a time limit.

\noindent\textbf{Raw performance.}
\autoref{fig:cactus} shows the scalability of XPXLC.
Here, XPXLC was set to compute $\text{10}^\text{6}$ distinct
explanations for each instance of each dataset.
For the cases having fewer than $\text{10}^\text{6}$ explanations,
XPXLC terminates as soon as all explanations are computed.
The smallest number of observed explanations per instance is 1, the
maximum number is at least $\text{10}^\text{6}$, while on average
29207.5 PI-explanations are reported per each instance.
The total number of instances to explain in this experiment is 94174.
The line drawn through point $(x, y)$ in \autoref{fig:cactus} shows
how many instances on the $X$-axis are solved by the time shown on the
$Y$-axis.
As can be observed, performance is not an issue for XPXLC -- it never
exceeds $\text{12}$ seconds to enumerate $\text{10}^\text{6}$
explanations for each of the target instances.
On average, XPXLC finishes complete enumeration (of at most
$\text{10}^\text{6}$ explanations) in 0.23 seconds.

\begin{figure*}[!t] \centering
  \begin{subfigure}[b]{0.315\textwidth}
    \centering
    \includegraphics[width=\textwidth]{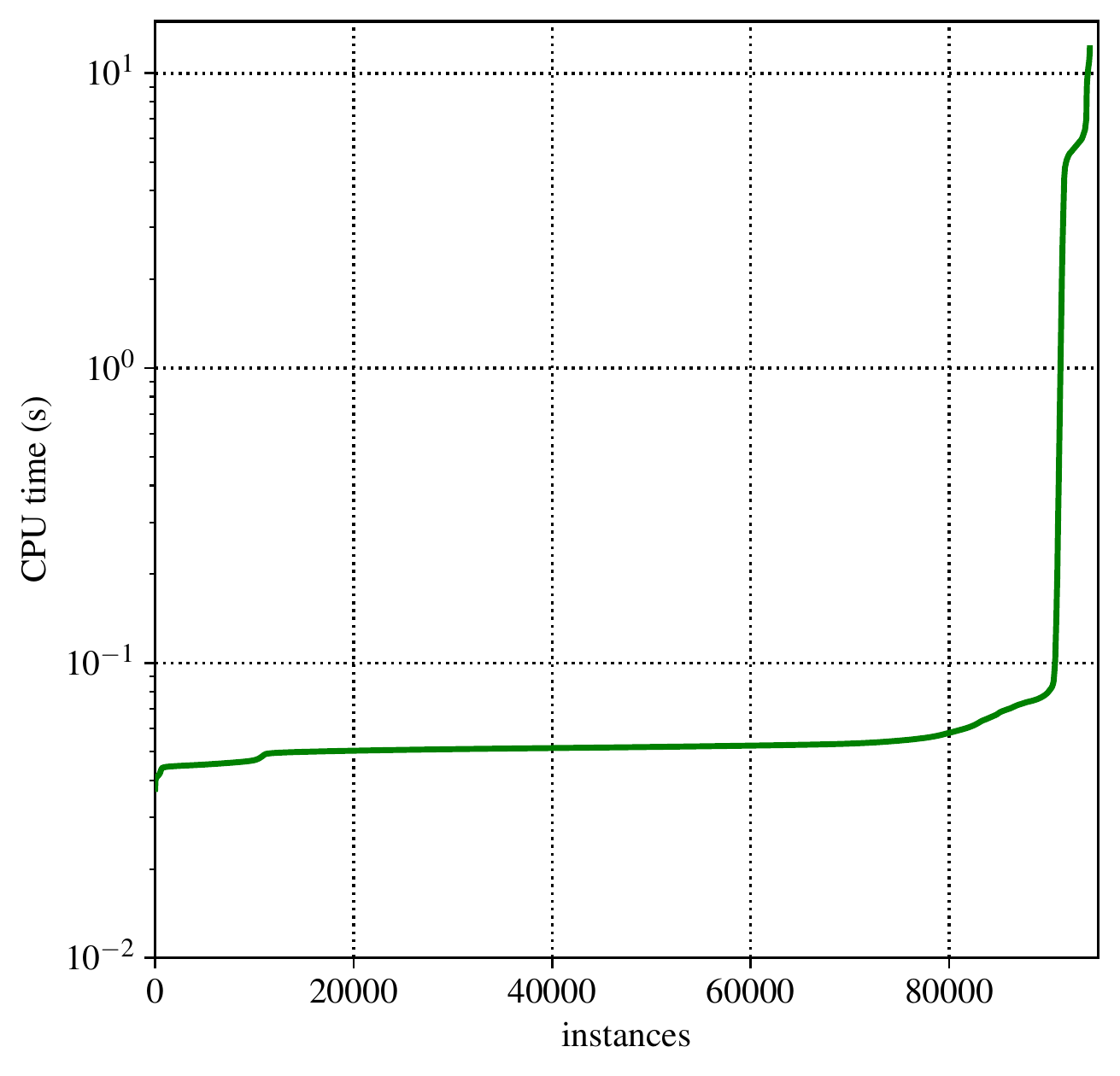}
    \caption{Raw performance of XPXLC}
    %
    \label{fig:cactus}
  \end{subfigure}%
  \hspace{3pt}
  \begin{subfigure}[b]{0.315\textwidth}
    \centering
    \includegraphics[width=\textwidth]{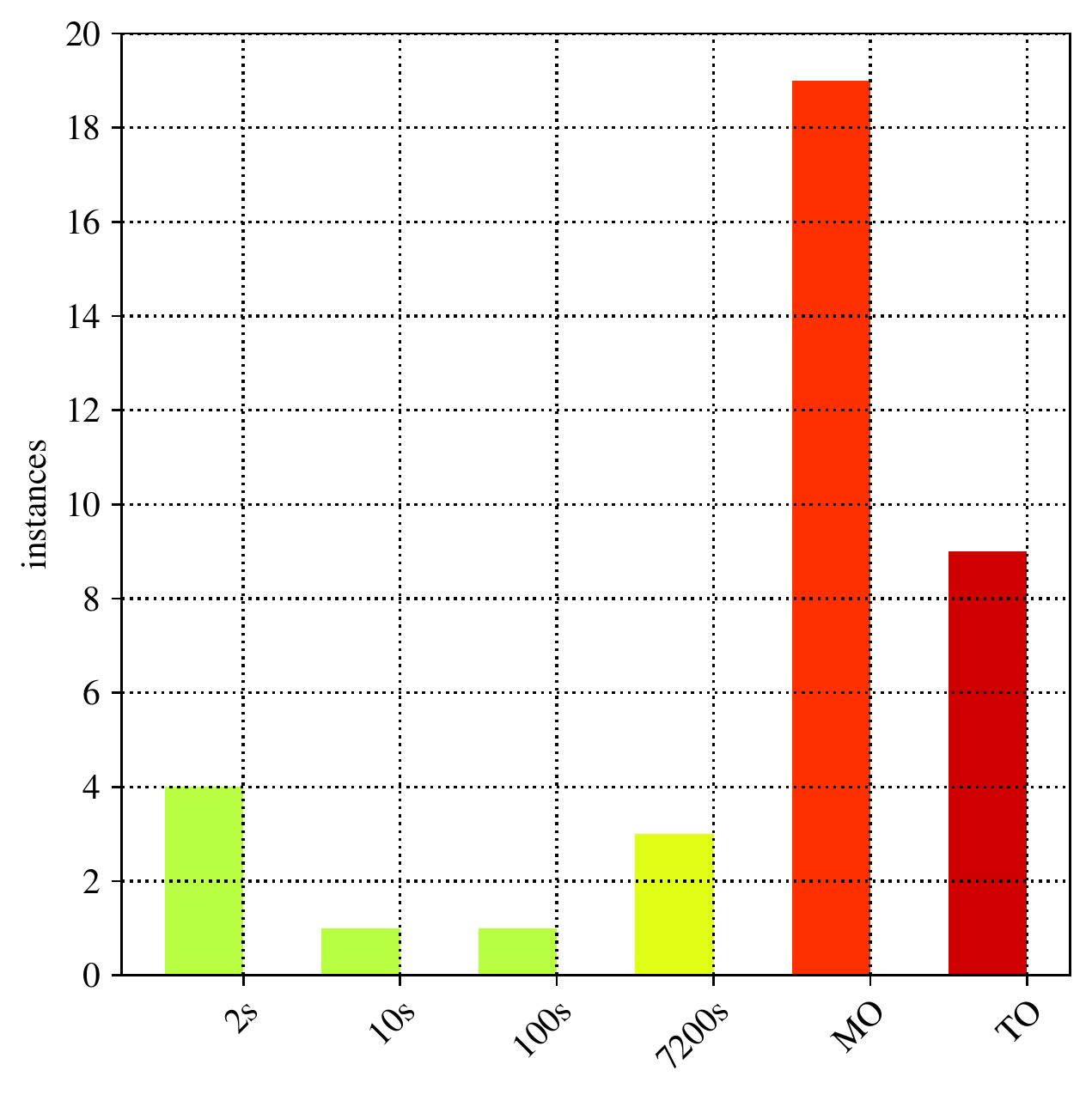}
    \caption{Performance of STEP (with MOs \& TOs)}
    \label{fig:hist}
  \end{subfigure}
  \hspace{3pt}
  \begin{subfigure}[b]{0.325\textwidth}
    \centering
    \includegraphics[width=\textwidth]{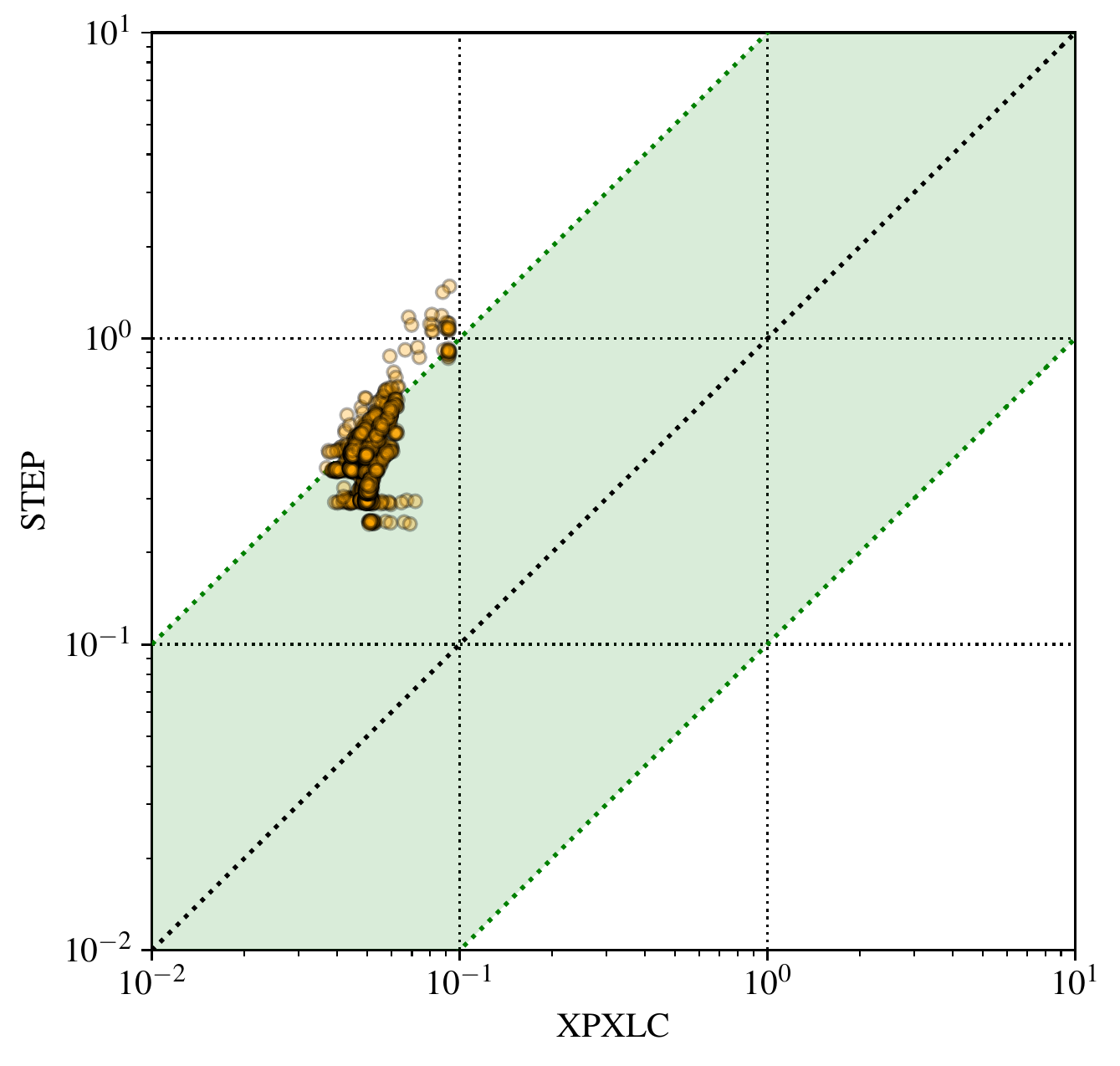}
    \caption{XPXLC vs STEP (no comp.~time)}
    \label{fig:scatter-nocomp}
  \end{subfigure}
  \caption{Scalability of XPXLC targeting $\text{10}^\text{6}$
  PI-explanations, performance of STEP, and comparative performance of
XPXLC and STEP.}
  \label{fig:plots}
\end{figure*}


\noindent\textbf{Enumerative vs.\ compilation-based approaches.}
The state of the art for finding PI-explanations for NBCs is
the STEP compilation-based
approach~\cite{darwiche-ijcai18,darwiche-aaai19,ucla}.
Concretely, STEP consists of (1)~compilation of a BNC classifier into
a \emph{sentential decision diagram} (SDD) and (2)~enumeration of
PI-explanations using efficient algorithms for SDD-based prime
implicant enumeration.
The existing implementation of STEP can only handle binary features.
Therefore, and in order to compare the relative performance of XPXLC
and STEP, we apply a one-hot encoding (OHE) to categorical features,
retrain the Naive Bayes classifiers and run both tools on the
OHE instances\footnote{%
This solution is not ideal, since the use of OHE impacts the
assumption of feature independence of NBCs, and only serves to enable
the comparison between STEP and XPXLC.}, targeting the complete
enumeration of explanations.
Moreover,
despite its worst-case exponential complexity in time and space, STEP
can still compile into SDDs 9 (out of 37) NBC classifiers, i.e.\ close
to 25\% of the classifiers, within the 2 hours time limit and 32 GByte
memory limit.
%
%
Once an NBC classifier is compiled into an SDD, enumeration of all
PI-explanations is relatively easy --- concretely, it takes 0.39
seconds for the compilation-based approach to enumerate all
explanations.
However, the SDD compilation step itself takes between 1 and 4300
seconds for the classifiers that can be compiled.
If the compilation time is amortized over all data instances of each
dataset, its impact ranges from a fraction of a second to $\approx$50
seconds.
\autoref{fig:hist} shows a histogram summarizing the performance of
STEP's compiler.
The bars in the histogram represent the classifiers that STEP is able
to compile within 2 seconds (there are 4 of them), 10 seconds
(1), 100 seconds (1), 2 hours (3) and also classifiers that STEP
fails to compile due to reaching the memory (MO) or time (TO) limits.
The last two bars represent 19 and 9 classifiers, respectively.
%
%
%
%
%
Finally, \autoref{fig:scatter-nocomp} summarizes the performance
comparison between XPXLC and STEP. In this comparison, the SDD
compilation time is \emph{ignored}, and the plot shows only instances
for the classifiers that STEP is able to compile within the 2 hour
time limit.
Also note that both tools finish complete enumeration of
PI-explanations for each of these instances.
A point $(x,y)$ in the plot represents the time (in seconds) spent by
XPXLC (shown on the $X$-axis) and by STEP (shown on the $Y$-axis) for
a concrete data instance.
Observe that, even if the compilation time is ignored, STEP's
enumeration phase is still between 4 and 20 times slower than XPXLC.



\noindent\textbf{Assessing heuristic approaches.}
Exhaustive enumeration of PI-explanations can serve to assess
heuristic explanations.
Exhaustive enumeration provides a distribution of how many times
feature-value pairs appear in explanations, and thus which are likely
to be more \emph{relevant} for the given prediction.
As a result, one can evaluate how many features in
a heuristic explanation ``hit'' the set of most relevant (commonly-occurring) features.
This strategy may be beneficial in some practical settings where
trustable explanations are of concern.
%
%
While our ``hit'' metric is a heuristic evaluation measure to compare the quality of explanations, we demonstrate  its usefulness experimentally. For example, our metric does show a strong correlation between features of heuristic explanations and common features that we identify via enumeration.
\autoref{fig:heur} depicts the percentage of features in explanations
of Anchor~\cite{guestrin-aaai18} and SHAP~\cite{lundberg-nips17}
``hitting'' the set of common features.
Here, we focus on 2 datasets
\emph{Adult}~\cite{kohavi-kdd96,guestrin-aaai18} and
\emph{Spambase}~\cite{uci} and use the following methodology.
For an explanation $E$ of Anchor, we keep the top
$|E|$ features most commonly-occurring in all
PI-explanations\footnote{If $>|E|$ features are in the top due to
  having the same frequency, all of them are marked as common.
  Also, the experiment is performed only for instances for which
complete PI-explanation enumeration finishes.}; then we count the
number of features in $E$ that hit the set of common features.
As SHAP assigns numerical weights to \emph{all} features, we take 5
features reported by SHAP as most relevant and count how many of them
intersect the set of 5 most common features of PI-explanations.
The rationale of this choice is that larger explanations are typically
harder for a user to reason about and so $\text{5}$ features is
normally deemed enough to make a conclusion wrt.\ the cause of
prediction.
As can be observed, both Anchor and SHAP are successful at hitting the
most common features.
However, in some cases both tools' explanations do not overlap our important features, e.g.\ Anchor has  zero overlap with the common features
in more than 2000 instances. Given a significant overlap in the majority of cases, a zero hit suggests that Anchor's explanation might be using less influential features and is hence less
trustworthy. 
This experiment illustrates another setting where PI-explanations can
be useful, i.e.\ not only to output a provably correct
explanation but also to provide the user with an alternative evaluation toolkit to measure confidence in 
heuristic explanations.
Finally, we observe that both Anchor and SHAP are significantly slower
than XPXLC: on average, Anchor takes 1.55 seconds to compute one
explanation of an instance, whereas SHAP takes 99.58 seconds.
In contrast, as highlighted above, XPXLC never exceeds a few tens of
$\mu$sec for computing a single explanation.
%


\section{Conclusions} \label{sec:conc}

This paper presents a log-linear algorithm for computing
a smallest PI-explanation of linear classifiers. Moreover, the paper
shows that PI-explanations for linear classifiers can be enumerated
with polynomial delay. The results in the paper also apply to NBCs
(among other classifiers), and so should be contrasted with earlier 
work~\cite{darwiche-ijcai18}, which proposes a worst-case exponential
time and space solution for computing PI-explanations of NBCs.
A natural line of research is to investigate extensions of XLCs that
also admit polynomial time algorithms for computing PI-explanations.

\bibliographystyle{abbrv}
\bibliography{refs,xai}

\appendix

\section{Appendix} \label{app:all}

\subsection{Additional Plots} \label{app:plots}

Additional plots are shown in~\autoref{fig:heur}.

\begin{figure*}[t]
  \centering
  \begin{subfigure}[b]{0.35\textwidth}
    \centering
    \includegraphics[width=\textwidth]{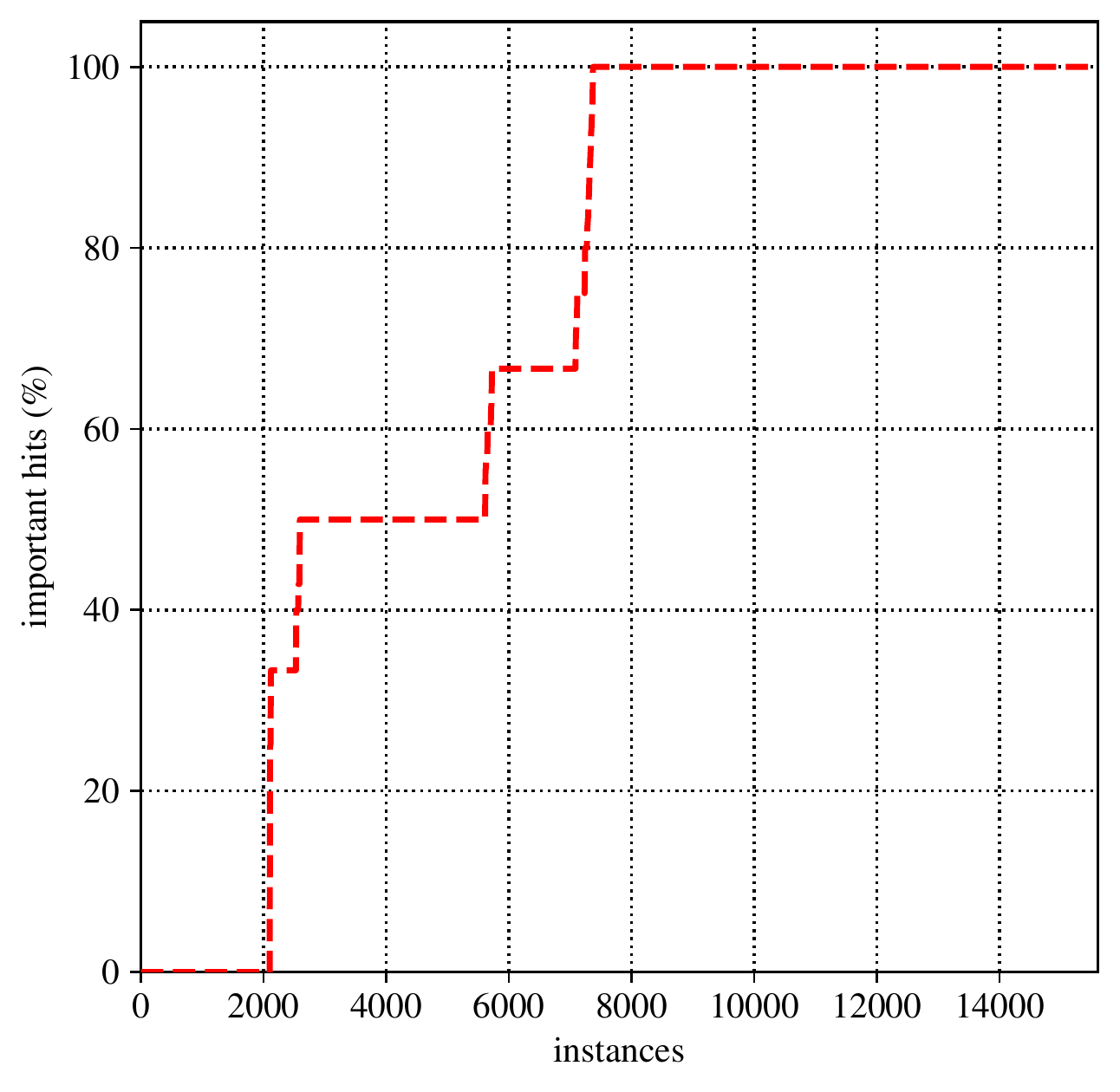}
    \caption{Anchor}
    \label{fig:anchor}
  \end{subfigure}%
  \hspace{50pt}
  \begin{subfigure}[b]{0.35\textwidth}
    \centering
    \includegraphics[width=\textwidth]{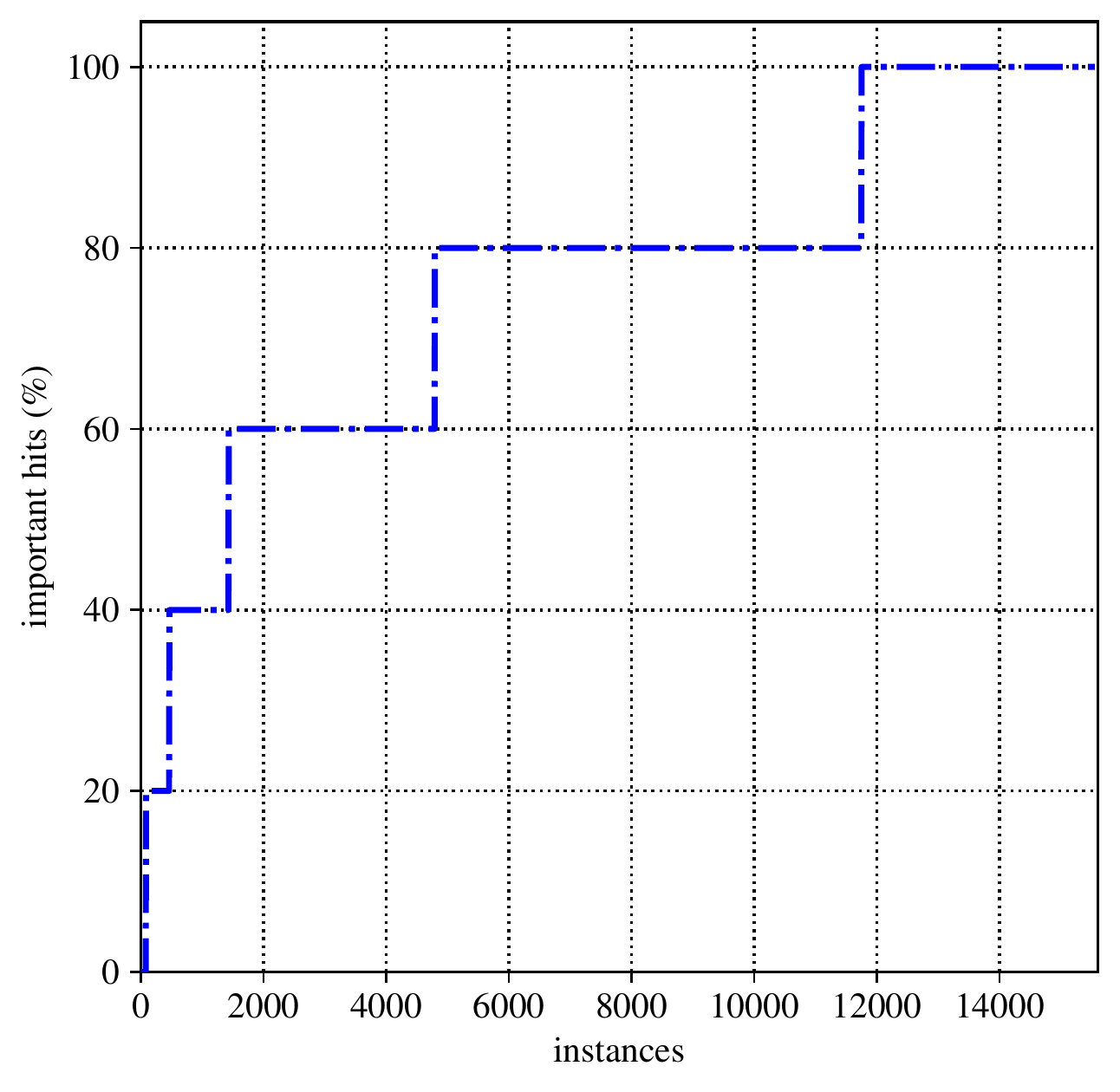}
    \caption{SHAP}
    \label{fig:shap}
  \end{subfigure}
  \caption{Percentage of important ``hits'' of explanations produced
  by Anchor and SHAP.}
  \label{fig:heur}
\end{figure*}


\subsection{Proofs} \label{app:proofs}

\begin{proposition} \label{prop:1xpl}
  Let $\langle{l_1},\ldots,{l_n}\rangle$ represent indices
  $\fml{E}$ sorted by non-increasing value of $\delta_j$.
  Pick $k$ such that
  $\sum_{j\in\{{l_1},\ldots,{l_k}\}}\delta_j>\Phi$ and
  $\sum_{j\in\{{l_1},\ldots,{l_{k-1}}\}}\delta_j\le\Phi$.
  Then~\eqref{eq:xlc03} holds for
  $\fml{P}=\{l_r|1\le{r}\le{k}\}$, and $\fml{P}$ represents an
  optimal solution of $\eqref{eq:xlc05}$.
\end{proposition}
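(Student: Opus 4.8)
The plan is to prove the two assertions separately: \emph{feasibility} --- that $\fml{P}=\{l_r\mid 1\le r\le k\}$ satisfies \eqref{eq:xlc03} --- and \emph{optimality} --- that $\fml{P}$ has smallest cardinality among all feasible sets, i.e.\ it solves \eqref{eq:xlc05}. I would rely on two structural facts already set up before the proposition: (i) every $\delta_j=v_j^{a_j}-v_j^{\omega}$ is nonnegative, since $v_j^{\omega}$ is by definition the smallest weight that feature $e_j$ can select; and (ii) with the picked set $\fml{P}$ fixed, $\nu(\mbf{e})$ is minimized over all completions of the remaining features by letting each free $e_j$ select its worst-case weight $v_j^{\omega}$ (the selector terms are mutually independent across features), so that the universally quantified implication \eqref{eq:xlc03} is equivalent to the single scalar inequality appearing in \eqref{eq:xlc04}, namely $-\Phi+\sum_{j\in\fml{P}}\delta_j>0$. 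I would also note that $k$ is well defined: because the prediction on $\mbf{a}$ is $\oplus$ we have $\Gamma=\nu(\mbf{a})>0$, and $\sum_{j\in\fml{E}}\delta_j-\Phi=\Gamma>0$, so the full set $\fml{E}$ already meets the prefix-sum condition and hence a least such $k\le n$ exists (with $k=0$, i.e.\ $\fml{P}=\emptyset$, in the degenerate case $\Phi<0$).

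Given (ii), feasibility is then immediate from the choice of $k$: the condition $\sum_{j\in\{l_1,\dots,l_k\}}\delta_j>\Phi$ rearranges to $-\Phi+\sum_{j\in\fml{P}}\delta_j>0$, which is exactly \eqref{eq:xlc03}.

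For optimality I would argue by contradiction. Suppose some feasible $\fml{Q}\subseteq\fml{E}$ has $|\fml{Q}|\le k-1$; feasibility of $\fml{Q}$ means $\sum_{j\in\fml{Q}}\delta_j>\Phi$. But the $\delta_j$ are sorted non-increasingly and are nonnegative, so the $\delta$-sum of any set of size at most $k-1$ is bounded above by the sum of the $k-1$ largest values, i.e.\ by $\sum_{j\in\{l_1,\dots,l_{k-1}\}}\delta_j$, which by the choice of $k$ is $\le\Phi$ --- contradicting $\sum_{j\in\fml{Q}}\delta_j>\Phi$. Hence every feasible set has size at least $k$, so $\fml{P}$ (feasible, of size exactly $k$) is an optimal solution of \eqref{eq:xlc05}. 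As a byproduct, $\fml{P}$ is also subset-minimal, since dropping any literal yields a feasible set of size $k-1$, which the same bound forbids.

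The only step that needs genuine care --- the main obstacle --- is the equivalence between the quantified implication \eqref{eq:xlc03} and the scalar inequality \eqref{eq:xlc04}; everything else is a one-line greedy/exchange argument. That equivalence rests on the coordinate-wise decomposition of the worst completion (each free categorical feature independently attains its minimum selector weight $v_j^{\omega}$, and analogously each free real-valued feature attains an endpoint of its range), which is precisely the computation carried out in the paragraph preceding the proposition, so I would invoke it rather than redo it.
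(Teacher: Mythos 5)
Your proof is correct, and it takes a genuinely different route to optimality than the paper. The paper proves optimality by a classical greedy exchange argument: it takes an arbitrary optimal solution $\fml{P}^{\ast}=\langle i_1,\ldots,i_k\rangle$ and shows by induction that each non-greedy pick $i_r$ can be swapped for the greedy pick $l_r$ without decreasing the sum $\mbb{V}(\cdot)$, so that some optimal solution coincides with the greedy one. You instead prove a direct \emph{lower bound} on the cardinality of any feasible set: since the $\delta_j$ are nonnegative and sorted, the $\delta$-sum of any set of size at most $k-1$ is dominated by $\sum_{j\in\{l_1,\ldots,l_{k-1}\}}\delta_j\le\Phi$, so no such set can satisfy the constraint in~\eqref{eq:xlc05}; combined with feasibility of the greedy prefix, this pins the optimum at exactly $k$. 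Your argument is arguably cleaner --- it avoids presupposing the existence and size of an optimal solution and yields subset-minimality of $\fml{P}$ as an immediate corollary --- while the paper's exchange argument is the more standard template and generalizes to situations where a simple sum bound is unavailable. You also add two small points the paper leaves implicit: that $k$ is well defined (because $\sum_j\delta_j-\Phi=\Gamma>0$, the full set already satisfies the prefix condition), and that the reduction from the quantified implication~\eqref{eq:xlc03} to the scalar inequality~\eqref{eq:xlc04} is the load-bearing step, justified by the coordinate-wise worst-case completion carried out in the text preceding the proposition. Both additions are accurate and worth making explicit.
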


\begin{proof} 
  We prove that an optimal solution to~\eqref{eq:xlc05} can be
  obtained with the greedy algorithm that picks features in
  non-increasing order of $\delta_j$'s.
  Let
  $\fml{P}^{\ast}=\langle i_1,\ldots,i_k \rangle$
  denote the $k$ indices in some optimal solution, such that
  $\delta_{i_1}\ge\ldots\ge\delta_{i_k}$.
  Moreover, let
  $\mbb{V}(\fml{P}^{\ast})=\sum_{j\in\{{i_1},\ldots{i_k}\}}\delta_j$.
  Clearly,
  $\mbb{V}(\fml{P}^{\ast})>\Phi$; otherwise $\fml{P}^{\ast}$ would not
  satisfy the constraint in~\eqref{eq:xlc05}.

  %
  We prove by induction that one can construct another optimal
  solution
  $\fml{P}=\langle{l_1},\ldots,{l_k}\rangle$, where $l_1,\ldots,l_k$
  denote the first $k$ features with highest $\delta_j$.
  For the base case, we consider the first pick, and suppose
  that $i_1\not=l_1$ (and so $l_1$ does not occur 
  in $\fml{P}^{\ast}$).
  We can construct another sequence
  $\fml{P}'=\langle{l_1},i_2,\ldots,i_k\rangle$, such that
  $\mbb{V}(\fml{P}')=\sum_{j\in\{l_1,i_2,\ldots,i_k\}}\delta_j\ge\mbb{V}(\fml{P}^{\ast})>\Phi$.
  Hence, $\fml{P}'$ is still an optimal solution, and starts with a
  greedy choice.
  For the general case, we assume that the first $r{-}1$ picks can be
  made to respect the greedy choice, and that the $r^{\tn{th}}$ does
  not. The reasoning now can be mimicked again, and so we can
  construct another optimal solution such that the $r^{\tn{th}}$
  choice is also greedy.
  Thus,~\autoref{prop:1xpl} yields a smallest PI-explanation.
  %
  %
\end{proof}

\begin{proposition} \label{prop:allxpl}
  PI-explanations of an XLC can be enumerated with log-linear delay.
\end{proposition}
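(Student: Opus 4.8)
The plan is to show that \autoref{alg:allxpl} performs a depth-first traversal of the binary tree whose node at depth $i$ branches on whether the index with the $i$-th largest $\delta$ (i.e.\ $\Delta[i]$) belongs to the candidate set, that this traversal reports each PI-explanation exactly once, and that it spends only $O(n)$ time between consecutive reports.

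First I would fix the structural characterisation behind \autoref{prop:1xpl}: since every $\delta_j\ge0$ and $\Phi\ge0$, and along any root-to-node path the selected $\delta_j$ are added in non-increasing order, a set $\fml{P}\subseteq\fml{E}$ is a PI-explanation of the XLC iff $\sum_{j\in\fml{P}}\delta_j>\Phi$ and $\sum_{j\in\fml{P}}\delta_j-\min_{j\in\fml{P}}\delta_j\le\Phi$; removing the smallest selected $\delta_j$ is the least harmful removal, so this second inequality is equivalent to genuine subset-minimality, and it also forces $\delta_j>0$ for every $j\in\fml{P}$. I would also record the prefix invariant maintained by the three procedures: at any point $\msf{Flip}[i]\in\{0,1\}$ for $i\le\msf{Idx}$ and $\msf{Flip}[i]=-1$ for $i>\msf{Idx}$, so the selected set (the indices with $\msf{Flip}=0$) always has its maximum equal to the index chosen last.

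Next I would establish correctness of the traversal. Here $\msf{Flip}[i]=0$ records ``$i$ selected'' and $\msf{Flip}[i]=1$ records ``$i$ not selected in the current branch''; \autoref{alg:1xpl} always extends the current prefix by selecting, and \autoref{alg:valst} implements backtracking, so \autoref{alg:allxpl} is exactly a depth-first search of the decision tree with two pruning rules: (i) once $\Phi^{R}<0$ the selected set $\fml{P}$ satisfies the constraint and --- because the index selected last is the largest and hence carries $\min_{j\in\fml{P}}\delta_j$, while $\Phi^{R}\ge0$ held just before that selection --- also satisfies $\sum_{j\in\fml{P}}\delta_j-\min_{j\in\fml{P}}\delta_j\le\Phi$, so it is a PI-explanation, is reported, and we backtrack (sound, since no proper superset is minimal); (ii) once $\sum_{i=\msf{Idx}}^{n}\Delta[i]<\Phi^{R}$ no completion of the prefix can reach the threshold, so its subtree is empty of solutions. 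Soundness of the output is immediate from (i). For completeness, given any PI-explanation $\fml{Q}$, the path that selects exactly the indices of $\fml{Q}$ keeps all partial sums $\le\Phi$ until the largest-index element of $\fml{Q}$ is added (by minimality of $\fml{Q}$) and keeps $\sum_{i=\msf{Idx}}^{n}\Delta[i]>\Phi^{R}$ throughout, so neither rule fires prematurely and the corresponding leaf is visited; since a depth-first search visits every node once, no PI-explanation is reported twice.

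Finally I would bound the delay. If a running value of $\sum_{i=\msf{Idx}}^{n}\Delta[i]$ is maintained together with $\msf{Idx}$, the feasibility test in \autoref{alg:valst} costs $O(1)$; since $\msf{Idx}\le n$ and increases by one per iteration, a call to \autoref{alg:1xpl} costs $O(n)$. Within a call to \autoref{alg:valst}, $\msf{Idx}$ only decreases, so the total number of $\msf{Flip}\gets-1$ resets is at most the value of $\msf{Idx}$ on entry, hence $O(n)$; moreover an outer iteration that resets nothing turns some $\msf{Flip}[\msf{Idx}]$ from $0$ to $1$ (by the prefix invariant) and is therefore immediately followed by one that does reset, so there are $O(n)$ outer iterations and the whole call is $O(n)$. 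Each iteration of the main loop of \autoref{alg:allxpl} consists of one call to each subroutine and, unless it terminates the search, outputs exactly one new PI-explanation; thus the delay between successive outputs is $O(n)$, and the only super-linear cost --- sorting the $\delta_j$ into $\Delta$, which is $O(n\log n)$ --- is absorbed into the delay preceding the first explanation, giving log-linear delay overall. I expect the amortised accounting for \autoref{alg:valst} to be the main obstacle: one must argue that repeated backtracking does not inflate the \emph{per-output} bound, which follows because each invocation is self-contained with cost linear in the current search depth, $\msf{Idx}$ being monotone within the invocation.
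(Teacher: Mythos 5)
Your proposal is correct and follows essentially the same route as the paper's proof: a depth-first search over the features in non-increasing order of $\delta_j$, where declaring a leaf as soon as the accumulated sum exceeds $\Phi$ guarantees subset-minimality (because the last-added $\delta$ is the smallest selected one), the look-ahead test $\sum_{i=\msf{Idx}}^{n}\Delta[i]>\Phi^{R}$ eliminates dead ends, and the per-leaf delay is $O(n)$ with the initial sort contributing the log-linear term. The only difference is presentational --- you tie the argument explicitly to the pseudo-code of Algorithms 1--3, whereas the paper reasons about an abstract DFS --- but the key lemmas and accounting are the same.
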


\begin{proof}
For simplicity of presentation, we assume that the values $\delta_i$
are sorted in non-increasing order, i.e.\ $\delta_1 \geq \ldots \geq
\delta_n$. This sorting operation can be achieved in log-linear 
time. Recall that $\delta_i \geq 0$ ($i=1,\ldots,n$) and
that a PI-explanation represented by the bit vector $p$ must satisfy
the two constraints: (C1) $\sum_{i=1}^{n} \delta_i p_i > \Phi$ and
(C2) $\forall j \in \{1,\ldots,n\}$ such that $p_j=1$,
$(\sum_{i=1}^{n} \delta_i p_i) - \delta_j p_j \leq \Phi$
(subset-minimality).

Consider an exhaustive depth-first binary search (DFS) in which at
depth $r$ the two branches correspond to $p_r=1$ and $p_r=0$.
It is critical for the correctness of this search that on each branch,
the $p_i$ variables are instantiated in non-increasing order of the
corresponding values $\delta_i$.
For a depth-$r$ node $\alpha$ of this search tree, let $S_{\alpha}$ be the
sum $\sum_{i=1}^{r} \delta_i p_i$. A node $\alpha$ is declared a leaf
(and is hence not expanded) if $S_{\alpha} > \Phi$. Assuming that, by
default, the remaining values $\delta_{r+1},\ldots,\delta_n$ are
assigned 0, node $\alpha$ satisfies (C1). Clearly, any other
descendant nodes (at which at least one of
$\delta_{r+1},\ldots,\delta_n$ is 1) would not satisfy (C2) and hence
does not need to be considered. This means that all PI-explanations will be found.
It remains to show that all leaves $\alpha$ satisfy subset-minimality and
hence are PI-explanations.
To see that $\alpha$ satisfies (C2),
let $\beta$ be its parent node. Since $\beta$ is not a leaf, we must
have $S_{\beta} = S_{\alpha} - \delta_r p_r \leq \Phi$.  But then
$S_{\alpha} - \delta_j p_j \leq \Phi$ for all $j$ such that $p_j=1$
since $\delta_j \geq \delta_r$ ($j=1,\ldots,r{-}1$). Thus, all leaves
correspond to PI-explanations.

We add to our DFS the pruning rule that a depth-$r$ node $\alpha$ is
only created if  $S_{\alpha} + \sum_{i=r{+}1}^{n} \delta_i >
\Phi$. This sum is calculated incrementally, so only requires $O(1)$
time at each node. The reason behind this rule is that if it is not satisfied,
then no descendant of $\alpha$ can satisfy (C1).  On the other hand,
if this rule is satisfied then we know that at least one descendant of
$\alpha$ will be a leaf (and as explained above will correspond to a
PI-explanation). It is well known that a depth-first search in a
search tree with no dead-end nodes provides a polynomial delay
algorithm~\cite{cohen-jc04}. In our DFS, the delay between visiting two leaves 
is linear in $n$. Since finding the first PI-explanation also requires a sorting step, 
with a log-linear complexity, we can conclude that the worst-case delay is log-linear. 
\end{proof}

\end{document}